\documentclass[10pt]{article} 

\usepackage[accepted]{imports/rlj} 

\usepackage{amssymb}            
\usepackage{mathtools}          
\usepackage{graphicx}           

\title{Gated Q-learning: Add Off-Policy Bias to Taste}

\setrunningtitle{Gated Q-learning}

\author{Brett Daley\textsuperscript{1,$\dagger$}}

\emails{brett@brettdaley.ai}

\affiliations{
$^{1}$\textbf{Meta}
\par 
$^\dagger$ All experiments, data collection, and processing activities were conducted in the author's personal capacity. No experiments, data collection, or processing activities were conducted on Meta infrastructure.
}

\contribution{
    We identify a new, general class of \Qlambda algorithms that utilize state-action-dependent trace-decay values.
    This offers a novel perspective on partial off-policy bias correction in \Qlearning methods, where importance sampling cannot be applied.
}
{
    State-action-dependent traces for Expected Sarsa have been previously studied to control off-policy bias in conjunction with importance sampling \citep[e.g.,][Ch.~12.8]{munos2016safe,sutton2018reinforcement}.
    To the best of our knowledge, this idea has never been explored in \Qlearning beyond Watkins' \Qlambda \citep{watkins1989learning}, which applies exploration-conditional trace cuts to eliminate off-policy bias.
}
\contribution{
    We propose Gated \Qlambda, which implements soft, exploration-conditional trace cuts to mitigate off-policy bias while preserving multistep credit assignment.
    We very briefly discuss an \nsteptext version as well.
    }
    {
    Gated \Qlambda interpolates smoothly between the classic methods of Watkins' \Qlambda \citep{watkins1989learning} and Peng's \Qlambda \citep{peng1996incremental}.
}
\contribution{
    We conduct a large-scale hyperparameter sweep in a random-walk environment adapted for off-policy control, generating detailed heatmaps to visualize the influence of step size, trace decay, and gating on Gated \Qlambda.
    Our heatmaps clearly illustrate a performance trade-off between Watkins' \Qlambda and Peng's \Qlambda.
    }
    {
    \citet[][Ex.~7.1]{sutton2018reinforcement} describes the 19-state random walk that we adapt for our experiment.
}
\contribution{
    We derive the value-function operator underlying this general class of \Qlambda algorithms with state-action-dependent traces, formally proving how the chosen traces impact its contraction rate and fixed point.
    }
    {
    \citet{kozuno2021revisiting} derived similar results for Peng's \Qlambda.
    Our theorems significantly generalize these to the newly identified class of \Qlambda algorithms.
}

\keywords{Q-learning, Off-Policy Learning, Bias-Variance Trade-Off, Multistep Returns, Eligibility Traces.} 

\summary{
    Multistep credit assignment is critical for sample-efficient reinforcement learning, yet managing off-policy bias in \Qlearning remains a fundamental challenge.
For 30 years, practitioners have been limited to a binary choice:
eliminate the bias at the cost of severely truncated eligibility traces (Watkins' \Qlambda), or ignore the bias to learn faster while injecting detrimental errors into the value estimates (Peng's \Qlambda).
Modern off-policy estimators fail to resolve this tension, as importance-sampling ratios collapse under \Qlearning's greedy target policy.
We introduce Gated \Qlearning, a novel algorithmic framework that ends this dilemma by smoothly interpolating between the two historical extremes.
Rather than relying on importance sampling, our approach employs a continuous, state-action-dependent gating mechanism to selectively attenuate eligibility traces in an exploration-aware manner.
We provide a rigorous theoretical foundation for this mechanism, proving that the expected operator remains a contraction mapping and deriving its exact fixed point.
Empirical evaluations verify that intermediate gating safely enables longer credit-assignment horizons, yielding faster initial learning than either extreme.
Gated \Qlearning offers a simple alternative to importance sampling while enabling customization of the effective multistep horizon and the amount of off-policy bias in \Qlearning agents.

}

\def\eqref#1{equation~\ref{#1}}

\def\1{\bm{1}}

\def\vone{{\bm{1}}}

\def\vc{{\bm{c}}}

\def\vq{{\bm{q}}}
\def\vr{{\bm{r}}}

\def\vv{{\bm{v}}}

\def\vx{{\bm{x}}}
\def\vy{{\bm{y}}}

\def\mE{{\bm{E}}}

\def\mI{{\bm{I}}}
\def\mJ{{\bm{J}}}

\def\mP{{\bm{P}}}

\def\mX{{\bm{X}}}

\def\mZ{{\bm{Z}}}

\def\mLambda{{\bm{\Lambda}}}

\DeclareMathAlphabet{\mathsfit}{\encodingdefault}{\sfdefault}{m}{sl}
\SetMathAlphabet{\mathsfit}{bold}{\encodingdefault}{\sfdefault}{bx}{n}

\def\emLambda{{\Lambda}}

\newcommand{\E}{\mathbb{E}}

\newcommand{\R}{\mathbb{R}}

\DeclareMathOperator*{\argmax}{arg\,max}

\usepackage{paper/preamble}

\begin{document}

\makeCover  
\maketitle  

\begin{abstract}
    
\end{abstract}

\section{Introduction}

Despite its simplicity, \Qlearning \citep{watkins1989learning} remains a staple of modern reinforcement learning (RL).
The appeal of \Qlearning lies in its theoretical elegance and its decoupling of the behavior policy from the target policy, allowing agents to continuously refine estimates of optimal values while exploring the environment or learning from historical replay buffers.
In deep RL specifically, where neural networks serve as function approximators, \Qlearning underpins the success of some of the most sample-efficient methods to date, including Deep Q-Networks \citep[DQN;][]{mnih2015human}, Rainbow \citep{hessel2018rainbow}, and Parallel Q-Networks \citep[PQN;][]{gallici2025simplifying}.
It also plays a critical role in offline RL \citep[e.g.,][]{fujimoto2019off,kumar2019stabilizing,kumar2020conservative,kostrikov2022offline}, where its off-policy nature is ideal for learning from static datasets.
Consequently, advancing the algorithmic foundations of \Qlearning directly translates to broader improvements across a vast array of deep RL architectures.

Standard \Qlearning is rooted in \onesteptext temporal-difference (TD) learning \citep{sutton1988learning}, which struggles to assign credit quickly over long time horizons.
Multistep learning is crucial for accelerating this process, but the naive application of forward-view return estimators such as \nsteptext returns or \lambdareturns is strongly biased in off-policy settings.
This bias stems from the distributional mismatch between the agent's exploratory behavior and the targeted greedy behavior.
The theoretically correct approach is to eliminate this bias by truncating the multistep return estimates whenever an exploratory action is taken, as in Watkins' \Qlambda \citep{watkins1989learning}.
However, empirical evidence indicates that simply \emph{ignoring} these corrections often yields superior performance \citep{daley2019reconciling,hernandez2018understanding}, which is the exact motivation behind Peng's \Qlambda \citep{peng1996incremental}.
Practitioners are thus faced with a limited choice:
either strictly eliminate the bias at the cost of severe truncation and slower learning, or accept the bias and ultimately limit the length of multistep returns that can be safely deployed.

Although multistep off-policy estimators that enable finer-grained control over off-policy bias do exist, they rely on importance sampling \citep{kahn1953methods} and are therefore not compatible with \Qlearning.
Key examples include Tree Backup \citep{precup2000eligibility}, Retrace \citep{munos2016safe}, and Recency-Bounded Importance Sampling \citep{daley2023trajectory}---all of which adapt the degree of reinforcement based on the ratio between the action probabilities assigned by the target and behavior policies.
However, in \Qlearning, the target policy is strictly greedy, causing the importance-sampling ratio to become binary-valued.
Consequently, this whole class of methods degenerates into the same aggressive Watkins-style correction, which, as previously mentioned, fails to preserve the long credit-assignment horizons needed for fast learning.

There is a clear need for a new mechanism to regulate off-policy bias in multistep \Qlearning, without the use of importance sampling.
We propose a novel approach that utilizes adaptive $\lambda$-values to partially ``gate'' the propagation of the eligibility trace specifically when exploratory (non-greedy) actions are taken.
This algorithm, which we call \emph{Gated \Qlambda}, mitigates some but not all of the off-policy bias while preserving the trace along greedy trajectories.
This strategy interpolates smoothly between the principled (but slow) Watkins' update and the biased (but fast) Peng's update.
We hypothesize that balancing this trade-off leads to superior learning compared to either extreme.
Gated \Qlearning is conceptually analogous to the gating mechanisms found in Long Short-Term Memory (LSTM) networks \citep{hochreiter1997long}, Gated Recurrent Units \citep[GRUs;][]{cho2014learning,chung2014empirical}, and gated attention \citep{xu2015show,dhingra2017gated}, which inspire its name, although its role is distinct---it modulates credit assignment in RL.

Our paper is dedicated to deeply understanding the properties and implications of this gating mechanism in off-policy credit assignment.
We primarily focus on eligibility traces and \lambdareturns, although we briefly discuss an \nsteptext variant as well (see \Cref{sub:nstep_gated_qlearning}).
We first derive Gated \Qlambda as a special case of a new, more general \Qlambda class which permits state-action-dependent $\lambda$-values while targeting a greedy policy---the latter being the key differentiator from \citeauthor{munos2016safe}'s \citeyearpar{munos2016safe} per-decision operator.
This greatly broadens the scope of our theoretical analysis while helping to contextualize and justify the specific choice of our adaptive gating strategy.
We then conduct a focused hyperparameter study in a random walk to illustrate how the gating mechanism impacts credit assignment and learning speed.
Building upon these empirical insights, we formally analyze Gated \Qlambda to establish its contraction rate and fixed point, providing clear theoretical proof of the trade-off between trace preservation and off-policy bias.
Our results demonstrate that there are still fundamental \Qlearning improvements to be discovered, and that faster learning and lower asymptotic error can be simultaneously achieved by adjusting off-policy bias to a desired, intermediate amount.

\section{Background}

The RL problem considers an agent whose objective is to learn to act in its environment in a way that maximizes its expected cumulative discounted reward.
Since the foundational work of \citet{watkins1989learning}, which introduced \Qlearning, the RL problem is most commonly framed as solving a Markov Decision Process (MDP) from sample-based interaction.
The MDP is typically described by a tuple $(\S,\A,p,r)$.
At each discrete time step $t \geq 0$, the agent observes a state $S_t \in \S$ and selects an action $A_t \in \A$ according to a behavior policy $b(a|s)$, which maps states to probability distributions over actions.
The environment then transitions to a new state $S_{t+1}$ according to the transition dynamics $p(s' \mid s, a)$, and the agent receives a scalar reward $R_{t+1}$ governed by the reward function $r(s,a)$.
The fundamental objective of the agent is to maximize the expected return, defined as the cumulative sum of discounted future rewards, $G_t \coloneqq \sum_{i=0}^{\infty} \gamma^i R_{t+i+1}$, where $\gamma \in [0, 1)$ is the discount factor.
In off-policy learning, we explicitly distinguish between this behavior policy (which explores the environment and generates trajectory data) and the target policy $\pi(a|s)$, the distinct policy that the algorithm is attempting to evaluate or optimize.

To measure the quality of a policy $\pi$, we define the action-value function $q_\pi(s, a) \coloneqq {\E_\pi [G_t \mid S_t = s, A_t = a]}$, which represents the expected return for taking action $a$ in state $s$ and subsequently following $\pi$.
\Qlearning aims to learn the optimal action-value function $q_*$ by representing these estimates as a tabular matrix or parameterized function $Q \in \RSA$.
The distinguishing feature of \Qlearning is that its target policy is always \emph{greedy} with respect to its current value estimates.
The estimated value of a state is therefore given by
\begin{equation*}
    V(s) \coloneqq \max_{a \in \A} Q(s,a)
    \,,
\end{equation*}
where the value of a terminal state is always defined to be $0$.
The classic 1-step \Qlearning update rule is then defined as
\begin{equation*}
    Q(S_t,A_t) \gets Q(S_t,A_t) + \alpha \Bigl(
            \underbrace{R_{t+1} + \gamma V(S_{t+1}) - Q(S_t,A_t)}_{\delta'_t}
        \Bigr)
    \,,
\end{equation*}
where $\alpha \in (0, 1]$ is the step size.
We refer to the quantity $\delta'_t$ as the \emph{\Qlearning (QL) error} to differentiate it from the classic state-value TD error.
Rooted in the Bellman optimality equation \citep{bellman1957dynamic}, this 1-step update is highly robust and guaranteed to converge to $q_*$ in tabular settings \citep{watkins1992q}.
However, because information about future rewards is solely conveyed through immediate bootstrapping, its 1-step nature results in painfully slow credit assignment, as rewards must propagate backward through the state-action space one step at a time over repeated episodic interactions.

To overcome the slow credit assignment, we can consider \emph{multistep} versions of \Qlearning by substituting the 1-step target with a generalized multistep return estimator $\hat{G}_t$:
\begin{equation*}
    Q(S_t,A_t) \gets Q(S_t,A_t) + \alpha \Bigl(
        \hat{G}_t - Q(S_t,A_t)
    \Bigr)
    \,.
\end{equation*}
Defining $\hat{G}_t$ to safely accelerate learning in off-policy settings has proven to be highly nontrivial.
If one were to simply use eligibility traces to accumulate a fading record of recent \onesteptext errors, analogous to on-policy TD($\lambda$) \citep{sutton1988learning}, it would produce the following forward-view target:
\begin{equation*}
    G^{\lambda\,\text{(naive)}}_t = Q(S_t,A_t) + \sum_{i=0}^\infty (\gamma \lambda)^i \delta'_{t+i}
    \,.
\end{equation*}
This update has become known as \emph{naive} \Qlambda \citep[][Sec.~7.6]{sutton1998introduction} because it completely fails to address the off-policy distributional mismatch between the exploratory behavior policy and the greedy target policy.
As a result, it suffers from severe bias and does not converge in off-policy settings unless $\lambda$ is kept impractically small \citep{harutyunyan2016q}, which heavily counteracts the original multistep benefits.

\citet{watkins1989learning} recognized that off-policy bias can be completely avoided by cutting the eligibility trace whenever a non-greedy action is taken.
Define the following greedy indicator function:
\begin{equation*}
    \greedy(s,a) \coloneqq a \in \argmax_{a' \in \A} Q(s,a')
    \,,
\end{equation*}
where ties are broken arbitrarily but consistently.
Written as a forward-view return, Watkins' \Qlambda, which cuts traces, generates the following target:
\begin{align}
    \label{eq:watkins}
    G^{\lambda\,\text{(Watkins)}}_t &= Q(S_t,A_t) + \sum_{i=0}^\infty \gamma^i \left(\prod_{j=1}^i \lambda_{t+j}\right) \delta'_{t+i}
    \,,\\
    \nonumber
    \text{where} \quad \lambda_t &= \begin{cases}
        \lambda & \text{if } \greedy(S_t,A_t), \\
        0 & \text{otherwise,}
    \end{cases}
\end{align}
and $\prod_{j=1}^0 \lambda_{t+j} \coloneqq 1$ to correctly initialize the first weight.
Watkins' \Qlambda fully eliminates off-policy bias, and in this sense is the theoretically ``correct'' implementation of \Qlearning with eligibility traces.
Unfortunately, because exploratory actions are frequent during training, traces are cut early and often.
This severe truncation mostly counteracts the speed benefits of multistep learning, making it seem as though \Qlearning is simply at odds with the goals of extended credit assignment.

To circumvent the severe trace cutting of Watkins' method, \citet{peng1996incremental} introduced a modified version of \Qlambda that prioritizes rapid credit assignment over strict off-policy correctness.
To formalize this, we first define the state-value TD error as 
\begin{equation*}
    \delta_t \coloneqq R_{t+1} + \gamma V(S_{t+1}) - V(S_t)
    \,.
\end{equation*}
Then, Peng's \Qlambda target becomes a hybrid of \Qlearning and TD($\lambda$):
\begin{equation*}
    G^{\lambda\,\text{(Peng)}}_t \coloneqq Q(S_t,A_t) + \delta'_t + \sum_{i=1}^\infty (\gamma \lambda)^i \delta_{t+i}
    \,.
\end{equation*}
Unlike Watkins' \Qlambda, this estimator never cuts the eligibility trace.
While this makes the update strongly biased in off-policy settings, it is empirically much faster.
Furthermore, this estimator satisfies an elegant recursive equation:
\begin{equation*}
    G^{\lambda\,\text{(Peng)}}_t = R_{t+1} + \gamma \Bigl((1-\lambda) V(S_{t+1}) + \lambda G^{\lambda\,\text{(Peng)}}_{t+1} \Bigr)
    \,,
\end{equation*}
where the recursion is initialized by $V(S_{T})$ at the end of a truncated trajectory or $R_{T}$ at the end of an episode.
This makes it very efficient to compute over offline trajectories of experience.
As a result, Peng's \Qlambda has become a popular choice in trajectory-based deep RL \citep[e.g.,][]{harb2016investigating,mousavi2017applying,daley2019reconciling,kozuno2021revisiting,gallici2025simplifying,elelimy2025deep} and is the most common multistep alternative to the widely used \nsteptext return \citep[e.g.,][]{hessel2018rainbow}.
However, in both cases, relying on uncorrected bias is fundamentally flawed, as it ultimately limits the safe effective horizon of the multistep return.

In a separate line of research, modern methods for off-policy learning have successfully managed this bias-variance trade-off using variations of importance sampling \citep{kahn1953methods}.
These include algorithms such as Tree Backup \citep{precup2000eligibility}, Retrace \citep{munos2016safe}, and Recency-Bounded Importance Sampling \citep{daley2023trajectory}.
While highly effective for off-policy learning, these approaches inherently fall under the Sarsa class of algorithms, meaning they evaluate or optimize stochastic (non-greedy) target policies.
They are fundamentally inapplicable to \Qlearning due to its greedy target policy, which causes importance-sampling ratios to collapse to either zero or nonzero.
This structural limitation precludes the use of modern importance-sampling methods, including resampling \citep[e.g.,][]{schlegel2019importance}, severely limiting the degree to which off-policy corrections can be controlled in \Qlearning.
As a consequence, there has been surprisingly little progress on multistep \emph{\Qlearning} estimators, leaving practitioners caught between the two extremes of Watkins' and Peng's \Qlambda.

\section{Variable \Qlambda}
\label{sec:variable_qlambda}

Before introducing our Gated \Qlambda, we begin by formalizing a more general class of \emph{variable} \Qlambda methods, of which our algorithm is a special case.
When applying multistep \Qlearning, practitioners have historically been faced with a limited choice: either fully correct the off-policy bias as in Watkins' \Qlambda, or ignore the bias as in Peng's \Qlambda.
As previously established, relying on uncorrected bias is bad because it limits the length of the credit-assignment window that can be safely deployed before value estimates diverge.
Furthermore, an ideal multistep \Qlearning method must have several properties that make it practically useful:
most notably the ability to mediate this bias without sacrificing computational efficiency.
To unify these different approaches, we adopt a variable $\lambda$ framework, which serves as a prime but underexplored candidate for resolving this dilemma.

The concept of state-action-dependent trace parameters has been discussed by \citet[][Ch.~12.8]{sutton2018reinforcement}, but only for \emph{Sarsa} methods and not \Qlearning.
In our notation, the \Qlearning return is
\begin{equation}
    \label{eq:lambda_variable}
    \tilde{G}^\lambda_t \coloneqq Q(S_t,A_t) + \delta'_t + \sum_{i=1}^\infty \gamma^i \left(\prod_{j=1}^i \lambda_{t+j}\right) \delta_{t+i}
    \,,
\end{equation}
where the overloaded function $\lambda \colon \SA \to [0,1]$ now determines $\lambda_t \coloneqq \lambda(S_t,A_t)$.
This return admits a recursive form:
\begin{equation}
    \label{eq:recursive_general}
    \tilde{G}^\lambda_t = R_{t+1} + \gamma \Bigl((1-\lambda_{t+1}) V(S_{t+1}) + \lambda_{t+1} \tilde{G}^\lambda_{t+1} \Bigr)
    \,.
\end{equation}
The equivalence between \Cref{eq:lambda_variable} and \Cref{eq:recursive_general} follows from a slight generalization of the fixed-$\lambda$ derivation given by \citet[][Sec.~5.2]{daley2025multistep}, and it is exactly this recursion that makes the return target efficient to compute, whether with eligibility traces or replayed trajectories.
The only instance of this equation that we found is Eq.~12.20 of \citet{sutton2018reinforcement}, where it is given for Expected Sarsa but not \Qlearning.
The key distinction is how the target policies are defined (non-greedy for Expected Sarsa, greedy for \Qlearning), which in turn changes the definition of $V(S_{t+1})$ as well as the applicability of importance sampling.
To our knowledge, Watkins' \Qlambda is the only existing \Qlearning algorithm that leverages this particular variable-trace formulation to manage off-policy data, leaving its broader potential for fine-grained bias control entirely unexamined.\footnote{
    Thus, to use this formula with any other choice of $\lambda(s,a)$, we must accept some bias and sacrifice convergence to $q_*$.
}

\begin{wraptable}{r}{0.5\textwidth}
    \caption{Comparison of different \Qlambda methods that can be expressed by \Cref{alg:gated_qlambda}.}
    \label{tab:qlambda}
    \begin{center}
    \setlength{\tabcolsep}{3.5pt}
    \begin{tabular}{ll}
    \toprule
    \textbf{Method} & \textbf{Decay Strategy} \\
    \midrule
    Watkins' \Qlambda & $\lambda_t = \begin{cases} \lambda & \text{if } \greedy(S_t,A_t) \\ 0 & \text{otherwise} \end{cases}$ \\
    \midrule
    Peng's \Qlambda & $\lambda_t = \lambda$ \\
    \midrule
    \textbf{Gated \Qlambda} & $\lambda_t = \begin{cases} \lambda & \text{if } \greedy(S_t,A_t) \\ \lambda \chi & \text{otherwise} \end{cases}$ \\
    \bottomrule
    \end{tabular}
    \end{center}
\end{wraptable}

Crucially, this framework reveals that both Watkins' and Peng's \Qlambda are special cases of variable \Qlambda in \Cref{eq:lambda_variable}.
Peng's \Qlambda is achieved by simply substituting a constant value of $\lambda$.
Watkins' \Qlambda is less obvious;
it follows because $\lambda$ is nonzero if and only if an action is greedy, and therefore $\delta'_t = \delta_t$ \emph{only} in \Cref{eq:watkins}.
We present pseudocode for the universal eligibility-trace template in \Cref{alg:gated_qlambda}, offering the specific $\lambda$ definitions for the methods in \Cref{tab:qlambda}.
In line~9, we indicate the specific choice of $\lambda$ to produce our flagship algorithm Gated \Qlambda that is introduced in the next section, but this line can be modified to implement Watkins' \Qlambda, Peng's \Qlambda, and any other variable \Qlambda method described by \Cref{eq:lambda_variable}.

\begin{algorithm}[t]
\caption{Gated \Qlambda}
\label{alg:gated_qlambda}
\SetAlgoLined
\DontPrintSemicolon
Initialize $Q(s,a)$ arbitrarily and $Z(s,a) \gets 0$ for all $(s,a)$\;
\BlankLine
\For{$t = 0, 1, \dots$}{
    Compute greedy action $A^*_t \coloneqq \argmax_{a \in \A} Q(S_t,a)$\;
    Take action $A_t$ according to policy in state $S_t$ \tcp*{Reuse $A^*_t$ if needed}
    Observe reward $R_{t+1}$ and next state $S_{t+1}$\;
    \BlankLine
    $\nstep{1}_t \coloneqq \begin{cases}
        R_{t+1} & \text{if } \terminal(S_{t+1}) \\
        R_{t+1} + \gamma \max\limits_{a' \in \A} Q(S_{t+1},a') & \text{otherwise}
    \end{cases}$ \tcp*{\onesteptext return}
    $\delta'_t \coloneqq \nstep{1}_t - Q(S_t,A_t)$ \tcp*{QL error}
    $\delta_t \coloneqq \nstep{1}_t - Q(S_t,A^*_t)$ \tcp*{TD error}
    \BlankLine
    $\lambda_t \coloneqq \begin{cases}
        \lambda & \text{if } A_t = A^*_t \\
        \lambda \chi & \text{otherwise}
    \end{cases}$ \tcp*{Change to implement other algorithms}
    \BlankLine
    \ForEach{$(s,a)$}{
        $Z(s,a) \gets \gamma \lambda_t Z(s,a)$ \tcp*{Decay trace}
        $Q(s,a) \gets Q(s,a) + \alpha \delta_t Z(s,a)$ \tcp*{Apply TD error to past}
    }
    \BlankLine
    $Q(S_t,A_t) \gets Q(S_t,A_t) + \alpha \delta'_t$ \tcp*{Apply QL error to present}
    \BlankLine
    \uIf{$\terminal(S_{t+1})$}{
        $Z(s,a) \gets 0$ for all $(s,a)$ \tcp*{Reset traces}
        Reset environment state $S_{t+1}$ \tcp*{Next episode}
    }
    \Else{
        $Z(S_t,A_t) \gets Z(S_t,A_t) + 1$ \tcp*{Increment trace}
    }
}
\end{algorithm}

\section{Gated \Qlearning}

Variable \Qlambda offers a large, untapped space of potential multistep \Qlearning methods.
To address the original problem of balancing trace preservation with off-policy bias mitigation, we focus on a specific subset that we name \emph{Gated \Qlearning}.

Gated \Qlearning leverages the variable nature of $\lambda_t$ to target and attenuate credit along \emph{exploratory} trajectories.
Unlike Watkins' \Qlambda, it does this in a smooth and forgiving way;
a trajectory earns only partial credit for each non-greedy action.
It turns out that this strategy achieves a pure interpolation between Watkins' and Peng's \Qlambda, yet without using any importance sampling.
We call this method \emph{Gated \Qlambda} and present it as the main algorithmic contribution of our paper (see \Cref{sub:gated_qlambda}), but briefly discuss the special case of an \nsteptext variant as well (see \Cref{sub:nstep_gated_qlearning}).

\subsection{Gated \Qlambda}
\label{sub:gated_qlambda}

We can conceptualize credit assignment in \Qlambda methods as productive errors flowing backward in time to update previous value estimates, like water through a series of pipe segments.
In any segment, the value of $\lambda$ determines the instantaneous flow rate:
what proportion of the water passes through and affects downstream values.
A maximum value of $\lambda=1$ allows unconstrained flow, flooding the value estimates with noise (high variance).
A minimum value of $\lambda=0$ shuts off the flow entirely, starving the pipeline of precious water (high bias).
Ideally, these two concerns would be balanced.

Continuing with this metaphor, non-greedy actions ``pollute'' the water from a \Qlearning perspective;
exploratory behavior taints the value estimation with bias because such actions do not match the greedy target policy.
However, contaminated water is preferable to no water---the intuitive reason why Peng's \Qlambda outperforms Watkins' \Qlambda, which discards the entire stream at the first sign of pollution.
It is better to selectively regulate the intake of contaminated water to maintain a reasonable flow rate while balancing cleanliness.
This is the core motivation behind Gated \Qlambda.

Formally, we introduce a hyperparameter $\chi \in [0,1]$, which we refer to as the ``gate.''
When an exploratory action is taken, we attenuate the eligibility of the TD error by an additional factor of $\chi$:
\begin{equation*}
    \lambda_t = \begin{cases} \lambda & \text{if } \greedy(S_t, A_t), \\ \lambda \chi & \text{otherwise.} \end{cases}
\end{equation*}
This definition makes it clear that Gated \Qlambda achieves a pure mathematical interpolation between the strict, safe updates of Watkins' \Qlambda ($\chi=0$) and the fast, uncorrected updates of Peng's \Qlambda ($\chi=1$).
This targeting mechanism is conceptually analogous to the gating architectures found in LSTM networks \citep{hochreiter1997long}, GRUs \citep{cho2014learning,chung2014empirical}, and gated attention \citep{xu2015show,dhingra2017gated}.
Just as these architectures smoothly regulate information flow to protect a network's internal memory state, our mechanism smoothly regulates error flow to protect the action-value estimates.
However, \emph{unlike} these architectures, the gate here is not a learnable parameter and remains independent of the agent's function approximator.

To understand the theoretical implication of this mechanism, let $k_{t:t+i} \in \{0, \dots, i\}$ denote the total number of non-greedy actions taken from time step $t+1$ through $t+i$.
The forward-view return of Gated \Qlambda can be expressed as:
\begin{equation}
    \label{eq:forward_gated}
    G^{\lambda\,\text{(Gated)}}_t = Q(S_t,A_t) + \delta'_t + \sum_{i=1}^\infty (\gamma \lambda)^i \chi^{k_{t:t+i}} \delta_{t+i}
    \,.
\end{equation}
This perspective clearly illustrates that the multistep error decays unconditionally by $(\gamma \lambda)^i$ as in Peng's method, but is strictly attenuated by an additional factor of $\chi$ for every non-greedy choice made along the trajectory.
While this forward view provides theoretical clarity regarding the geometric accumulation of the gate, we note that practical implementations will rely on either the equivalent recursive formula in \Cref{eq:recursive_general} or standard backward-view eligibility traces as detailed in \Cref{alg:gated_qlambda}.

\subsection{\nsteptext Gated \Qlearning}
\label{sub:nstep_gated_qlearning}

As an aside, we note that we can derive an \nsteptext return target that applies the same gating mechanism, and thus our idea is not specific to \lambdareturns.
We achieve this by truncating the Gated \Qlambda target in \Cref{eq:forward_gated} to just $n$ steps and then setting $\lambda=1$.
This yields the following return estimate:
\begin{equation*}
    \nstep{n}_t \coloneqq Q(S_t,A_t) + \delta'_t + \sum_{i=1}^{n-1} \gamma^i \chi^{k_{t:t+i}} \delta_{t+i}
    \,.
\end{equation*}
Although this \nsteptext estimator is very interesting, we do not pursue it further in this work.
We imagine, though, that it could be beneficial to large-buffer deep RL algorithms in the DQN family \citep{mnih2015human,hessel2018rainbow}, which commonly use biased, uncorrected \nsteptext returns to improve sample efficiency.
A slight drawback to our \nsteptext estimate is that its computational cost scales linearly with $n$ due to the individual TD errors.
However, with massive hardware parallelization, the cost may not be noticeable for typical values of $n$.

\section{Hyperparameter Study}

Before we formally analyze and evaluate Gated \Qlambda in \Cref{sec:analysis}, we conduct a focused hyperparameter sweep to gain an intuition for how the gating mechanism impacts credit assignment.
Code to reproduce this experiment is available online.\footnote{
    \url{https://github.com/brett-daley/gated-q-learning}
}

For our test environment, we adapt the 19-state random walk from \citet[][Sec.~12.1]{sutton2018reinforcement}.
This environment has 19 linearly connected states and two actions to move left or right.
The agent starts each episode in the central state.
The agent's behavior policy chooses either action with equal probability.
Reaching the extreme ends of the walk yields a $-1$ or $+1$ reward (left and right, respectively) and terminates the episode.
The simple, linear topology of this environment is ideal for isolating and measuring credit assignment.

Because the traditional random-walk experiment is set up for on-policy prediction, we must modify it for off-policy control.
We first apply a slight discount factor of $\gamma = 0.99$, to incentivize the agent to earn rewards expediently.
We then calculate the optimal action-value function, $q_*$.
The key difference in our setup is that we train the agents for a fixed number of steps (500) instead of episodes, to capture the initial learning speed of the agent.
Each agent's value function is initialized with negligible Gaussian noise ($\sigma = 10^{-9}$) to break ties at the start.
We record the root-mean-square (RMS) error between $Q$ and $q_*$ on every time step.
Note that if we trained for \emph{too} long, the results would be biased in favor of Watkins' \Qlambda because it has no asymptotic bias.
This would fail to capture the phenomenon we really want to study:
the ability for trace preservation across off-policy trajectories to accelerate learning in spite of the bias.
We thus want to examine the early phase of training where trace preservation can be a key performance differentiator.

\begin{wrapfigure}{R}{0.5\textwidth}
    \centering
    \includegraphics[width=0.95\linewidth]{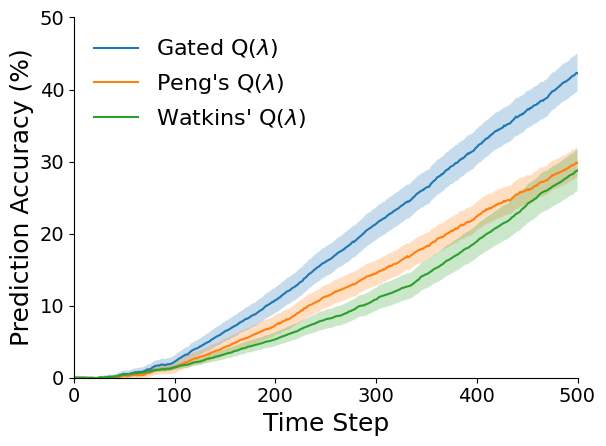}
    \caption{Learning curves for Gated, Peng's, and Watkins' \Qlambda methods with their respective best hyperparameters. Prediction accuracy is defined as the overall RMS error reduction, where 0\% represents no progress and 100\% represents perfect convergence to $q_*$. Results are averaged over 300 random seeds each. The shading represents 95\% confidence intervals.}
    \label{fig:rw_curves}
\end{wrapfigure}

We evaluate Gated \Qlambda with eligibility traces, described in \Cref{alg:gated_qlambda}, by sweeping over $\alpha$, $\lambda$, and $\chi$.
We divide each axis into 21 uniform values from $0$ to $1$.
This results in 9,261 hyperparameter combinations, each averaged across 300 random seeds, for a total of almost 3~million independent trials.
Note that the extremal values of $\chi = 0$ and $\chi = 1$ correspond to Watkins' and Peng's, respectively, so the vast majority of evaluated configurations represent new and previously untested variations of \Qlambda.

\begin{wraptable}{R}{0.5\textwidth}
    \caption{Hyperparameters used in \Cref{fig:rw_curves}.}
    \label{tab:rw_hyperparameters}
    \begin{center}
    \begin{tabular}{rccc}
        \toprule
        Method & $\alpha$ & $\lambda$ & $\chi$ \\
        \midrule
        Watkins' \Qlambda & $1.00$ & $0.95$ & $0.00$ \\
        Gated \Qlambda & $0.95$ & $1.00$ & $0.45$ \\
        Peng's \Qlambda & $1.00$ & $0.70$ & $1.00$ \\
        \bottomrule
    \end{tabular}
    \end{center}
\end{wraptable}

Let us consider how the results are theoretically affected by preserving traces.
Each time the agent moves right, it receives an optimal experience.
Clearly, this experience should be reinforced, and all of the agents weight its influence on past state-action pairs by the same value: $\lambda$.
The sole difference in the methods lies in the case where the agent moves \emph{left}---a suboptimal exploratory move.
If the agent were to move right again afterwards, the methods that do not cut traces will ultimately reinforce the correct action in spite of this exploration.
This is the principal mechanism that makes Peng's \Qlambda learn faster in practice.
However, if the agent \emph{does} end up receiving the negative reward on the left, it will reinforce a bad experience that would never be taken by the optimal policy, incurring bias.

We generate learning curves for each hyperparameter configuration.
We invert and normalize the RMS errors to instead measure \emph{prediction accuracy}, where 0\% represents the initial error (no learning) and 100\% represents perfect convergence to $q_*$ (optimal performance).
We take the area under the curve (AUC) of each learning curve to produce a single summarizing scalar.
The curves for the best-performing hyperparameter selection for each method (with $\chi$ fixed at $0$ or $1$ for Watkins' and Peng's, respectively; see \Cref{tab:rw_hyperparameters}) are plotted in \Cref{fig:rw_curves}.

Collectively, the AUC scalars form a 3-D cube of $21 \times 21 \times 21$ voxels.
We identify the globally optimal point at $\alpha=0.95$, $\lambda=1.00$, and $\chi=0.45$.
We then cross section the cube through this point along orthogonal planes to generate three 2-D heatmap visualizations in \Cref{fig:rw_heatmaps}.

\begin{figure}[t]
    \centering
    \includegraphics[width=\linewidth]{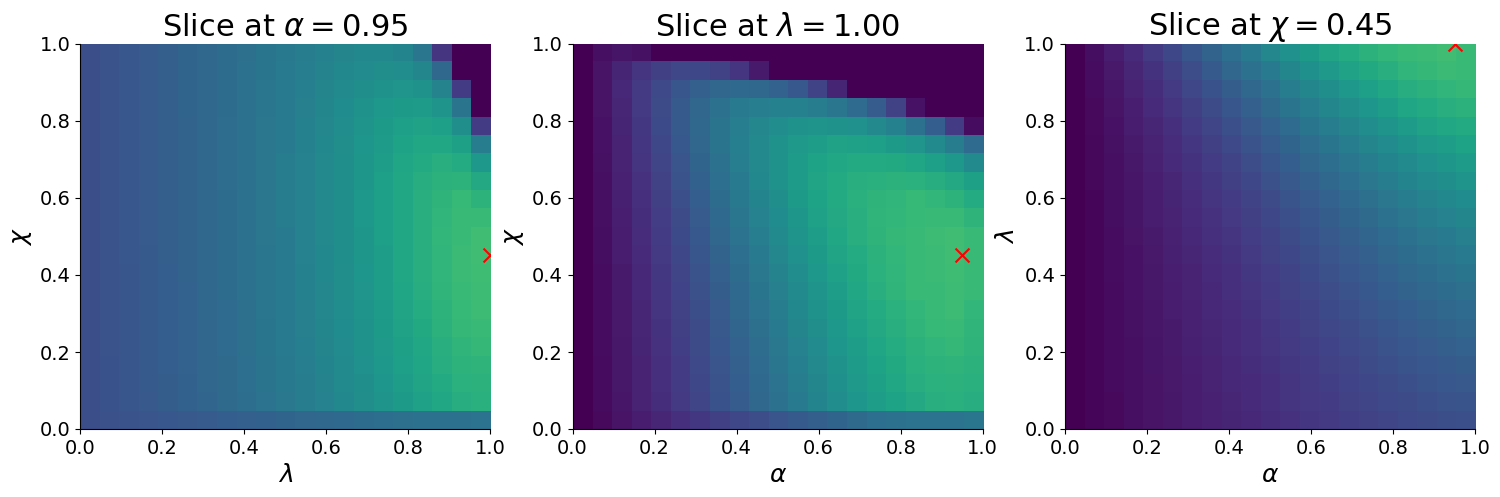}
    \caption{Cross-sectional slices of the 3-D heatmap generated for the random walk. Each slice passes through the coordinate of the best hyperparameters found by the search, indicated by a red~X (see \Cref{tab:rw_hyperparameters}). The colors indicate the area under the curve achieved by each hyperparameter configuration, with warmer colors indicating larger values. Each voxel is averaged over 300 trials.}
    \label{fig:rw_heatmaps}
\end{figure}

This experiment reveals three major insights:
(i) Gated \Qlambda learns significantly faster than both baselines;
(ii) the gating mechanism enables the safe use of higher $\lambda$-values;
and (iii) performance is remarkably robust across a wide range of intermediate gate values ($\chi \in [0.2, 0.6]$).
Clearly, an intermediate $\chi$ improves the best performance in this task.
This gives merit to the idea that balancing the bias-variance trade-off through a gating mechanism is advantageous.

\section{Analysis}
\label{sec:analysis}

In this section, we identify and analyze a general value-function operator underpinning the variable \Qlambda method introduced in \Cref{sec:variable_qlambda}.
Our analysis here is not intended to capture the stochastic behavior of the online eligibility-trace method presented in \Cref{alg:gated_qlambda}.
Instead, we primarily wish to understand the properties of Gated \Qlambda's forward-view return target.\footnote{
    Standard results in stochastic approximation \citep[e.g.,][Ch.~4--5]{bertsekas1996neuro} suggest that a stochastic algorithm with a corresponding contractive operator converges to the same fixed point, given appropriate step-size schedules and sufficient exploration.
    Formally establishing this here is left for future work.
}
This is especially relevant for deep RL, where value estimates are typically produced by a frozen target network and backward-view eligibility traces are generally not used.\footnote{
    We refer the reader to \citet[][Ch.~1, 7]{daley2025multistep} for a detailed account.
    Forward-view returns are more compatible with minibatch sampling, which draws transitions out of temporal order.
    Although some recent algorithms have revisited backward-view eligibility traces for deep RL, they currently tend to be less stable and sample-efficient than replay methods.
}
Operator theory provides exactly the right tool to understand how the gating mechanism addresses the trade-off between expected convergence speed and off-policy bias.

\subsection{Variable \Qlambda Operator}

We start by deriving $\op \colon \RSA \to \RSA$, a general operator capable of expressing the variable \Qlambda methods introduced in \Cref{sec:variable_qlambda}, including Watkins' \Qlambda, Peng's \Qlambda, and our Gated \Qlambda.
Here, vectors represent value functions (e.g., $\vq \in \RSA$ represents $Q$), where each element corresponds to the value estimate for a particular state-action pair.
The operator acts on the state-action value estimates in the update $\vq \gets \op \vq$, thus producing the expected value of the targets $\tilde{G}^\lambda_t$ from \Cref{eq:lambda_variable} for each state-action pair.
Similarly, if we were to set $\alpha$ to be very small and hold $Q$ fixed while executing \Cref{alg:gated_qlambda} and averaging updates on the side, then the net update would be approximately equal to $\op \vq - \vq$.
This is the classic forward-backward equivalence of eligibility traces \citep[][Sec.~7.4]{sutton1998introduction}, allowing us to analyze the expected behavior of the return target and the eligibility traces purely through the lens of $\op$.

To construct $\op$, we first define a few fundamental operators based on the bipartite MDP decomposition from \citet[][Ch.~2]{daley2025multistep}.
Let $P \colon \RS \to \RSA$ be the transition dynamics operator.
We also define a policy operator $E_\pi \colon \RSA \to \RS$ which computes the expected state values under any policy $\pi$.
Let $E_*$ denote the special case of a \emph{greedy} policy $\pi_{\text{greedy}}$ with respect to $Q$.
Finally, to simplify the analysis, we introduce an expansion operator $J \colon \RS \to \RSA$, which broadcasts a state's value to each of its action values.
Note that the reward function is itself treated as a value function, $\vr \in \RSA$, to which dimensionally compatible operators can be applied.

With these definitions, the Bellman operator for action values is expressed as
\begin{equation}
    \label{eq:bellman_op}
    T \vq \coloneqq \vr + \gamma \mP E_* \vq
    \,.
\end{equation}
Note that we bold both value functions and \emph{linear} operators whenever they appear as such algebraic quantities.
Next, we define the trace-decay matrix $\mLambda$.
Because the trace-decay parameter $\lambda(s, a)$ can vary by state-action pair, $\mLambda$ is a diagonal matrix of size $\abs{\SA} \times \abs{\SA}$.
The elements are defined by
\begin{equation*}
    \emLambda_{i,j} \coloneqq \begin{cases}
        \lambda(s_i,a_i) & \text{if $i=j$} \,, \\
        0 & \text{if } i \neq j \,,
    \end{cases}
    \quad
    \forall \, i, j \in \{1, \dots, \abs{\SA}\}
    \,.
\end{equation*}
It is crucial to note that because methods like Gated \Qlambda define $\lambda(s, a)$ as a function of the current value estimates, $\mLambda$ is an implicit function of $Q$ in general.
This theoretically makes it a nonlinear operator.
However, because we analyze only the pure policy-evaluation setting in our work, we can safely assume that $Q$ is a fixed quantity and therefore treat $\mLambda$ as a linear operator.
This is a standard and necessary assumption for policy evaluation, which makes the analysis tractable while perfectly mirroring the stale-target setting common in modern RL architectures.

\begin{restatable}{prop}{opdef}
    \label{prop:op_def}
    The operator $\op$ for variable \Qlambda has the closed-form definition
    \begin{equation}
        \label{eq:lambda_op_def}
        \op \vq \coloneqq (\mI - \gamma \mP \mE_b \mLambda)^{-1} (T \vq - \gamma \mP \mE_b \mLambda \mJ E_* \vq)
        \,.
    \end{equation}
\end{restatable}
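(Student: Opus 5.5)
The plan is to take expectations of the recursive form in \Cref{eq:recursive_general}, rewrite them as a linear fixed-point equation in operator notation, and solve that equation by a Neumann series. Define $\vy \in \RSA$ by $y(s,a) \coloneqq \E_b[\tilde{G}^\lambda_t \mid S_t=s, A_t=a]$, with $Q$ held fixed as the paper assumes, so that $\mLambda$ and $E_*$ are fixed linear maps. The claim is $\op\vq = \vy$.

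First I would condition on $(S_t,A_t)=(s,a)$ in \Cref{eq:recursive_general}. The reward term contributes $\vr$. Next, $S_{t+1}$ is drawn by $\mP$, and then $A_{t+1}\sim b$. The quantities $\lambda_{t+1}$ and $\tilde{G}^\lambda_{t+1}$ depend on $(S_{t+1},A_{t+1})$. By the Markov property, $\E[\tilde{G}^\lambda_{t+1}\mid S_{t+1},A_{t+1}] = \vy(S_{t+1},A_{t+1})$. The bootstrap value $V(S_{t+1})$ depends only on the state and equals $(E_*\vq)(S_{t+1})$. Broadcasting it to state-action pairs with $\mJ$ lets it be weighted by $1-\lambda(S_{t+1},A_{t+1})$ and then averaged by $\mE_b$. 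This gives
\begin{equation*}
\vy = \vr + \gamma \mP \mE_b\bigl((\mI-\mLambda)\mJ E_*\vq + \mLambda \vy\bigr).
\end{equation*}
Since $\mE_b \mJ = \mI$ on $\RS$ (a policy average of a constant is that constant), the term $\gamma\mP\mE_b\mJ E_*\vq$ reduces to $\gamma\mP E_*\vq$, which combines with $\vr$ to give $T\vq$. Rearranging yields
\begin{equation*}
(\mI - \gamma\mP\mE_b\mLambda)\vy = T\vq - \gamma\mP\mE_b\mLambda\mJ E_*\vq.
\end{equation*}

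Next I would show that $\mI - \gamma\mP\mE_b\mLambda$ is invertible. The matrix $\mP\mE_b$ is row-stochastic, and $\mLambda$ is diagonal with entries in $[0,1]$, so $\|\gamma\mP\mE_b\mLambda\|_\infty \le \gamma < 1$. The Neumann series therefore converges, the inverse exists, and it equals $\sum_{k\ge0}(\gamma\mP\mE_b\mLambda)^k$. This gives the formula in \Cref{eq:lambda_op_def}.

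The main obstacle is justifying the conditional-expectation step rigorously. The recursion unrolls infinitely, so I need $\vy$ to be well defined, and I need to handle terminal states, where $V=0$ and the recursion stops. For well-definedness, bounded rewards and $\gamma<1$ make the forward sum in \Cref{eq:lambda_variable} absolutely convergent, so the expectation exists and can be exchanged with the sum. For termination, I would treat terminal states as absorbing with zero reward and zero value. As a cross-check on the recursion argument, I would expand \Cref{eq:lambda_variable} directly. The $i$-th term then has expectation $(\gamma\mP\mE_b\mLambda)^{i-1}\gamma\mP\mE_b\mLambda$ applied to the expected TD error, and $\E[\delta'_t] = T\vq-\vq$. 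Summing these terms should reproduce the same Neumann series.
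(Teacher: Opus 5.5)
Your proof is correct and takes essentially the same route as the paper. The paper also converts the expectation of \Cref{eq:recursive_general} into the operator equation $\op\vq = T\vq + \gamma\mP\mE_b\mLambda(\op\vq - \mJ E_*\vq)$, which is equivalent to yours via $\mE_b\mJ = \mI_\S$, then rearranges and inverts; your Neumann-series bound $\maxnorm{\gamma\mP\mE_b\mLambda} \le \gamma < 1$ supplies the invertibility that the paper simply assumes. In your optional cross-check, only the leading $\delta'_t$ term has expectation $T\vq - \vq$, whereas each $\delta_{t+i}$ with $i \ge 1$ has conditional expectation $T\vq - \mJ E_*\vq$ given its state-action pair; with this, the series does sum to \Cref{eq:lambda_op_def}.
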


\begin{proof}
    See \Cref{app:prop_op_def}.
    The result follows by converting the expectation of the recursive target in \Cref{eq:recursive_general} to operator form, and then manipulating it algebraically to isolate $\op$.
\end{proof}

\Cref{eq:lambda_op_def} reveals that the $\op$ operator takes the generic form $\mZ^{-1}(\vy + \mX \vq)$.
Here, the matrix $\mZ^{-1} = (\mI - \gamma \mP \mE_b \mLambda)^{-1}$ represents the expected eligibility trace for every state-action pair under the behavior policy $b$ and the chosen decay scheme $\mLambda$. The inner vector term $(\vy + \mX \vq) = \vr + \gamma \mP E_* \vq - \gamma \mP \mE_b \mLambda \mJ E_* \vq$ fills in the complementary gaps of the decayed trace with bootstrapped state-value estimates derived from $\max_{a \in \A} Q(s,a)$.

\subsection{Contraction Rate}

To quantify the expected error reduction achieved by a single application of $\op$, our next step is to prove that it is a contraction mapping.
Let $\maxnorm{\cdot}$ denote the maximum norm of a vector or matrix (i.e., the maximum row sum of the absolute values).
We formally define a contraction mapping with respect to this norm below.
\begin{definition}
    \label{def:contraction}
    A value-function operator $H \colon \RSA \to \RSA$ is a contraction mapping if and only if there exists a constant $\beta \in [0,1)$ such that, $\forall \, \vq_1, \vq_2 \in \RSA$,
    \begin{equation*}
        \maxnorm{H \vq_1 - H \vq_2} \leq \beta \maxnorm{\vq_1 - \vq_2}
        \,.
    \end{equation*}
\end{definition}
We seek to find a \emph{contraction modulus} for $\op$: a constant $\beta$ that satisfies \Cref{def:contraction}.
To facilitate this, let $\vone_\S \in \RS$ and $\vone_{\SA} \in \RSA$ denote the all-ones vectors.
\begin{restatable}{theorem}{contraction}
    \label{th:contraction}
    Let $\vc \coloneqq \mE_b \mLambda \vone_{\SA}$ and $\cmin \coloneqq \min_{s \in \S} c(s)$.
    The operator $\op$ is a contraction mapping with modulus
    \begin{align*}
        \beta &= \frac{\gamma (1-\cmin)}{1 - \gamma \cmin}
        \,.
    \end{align*}
\end{restatable}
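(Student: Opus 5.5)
Starting from the closed form in \Cref{prop:op_def}, the reward terms cancel in the difference $\op \vq_1 - \op \vq_2$, leaving
\begin{equation*}
    \op \vq_1 - \op \vq_2 = \mZ^{-1} \gamma \mP \bigl(E_* \vq_1 - E_* \vq_2 - \mE_b \mLambda \mJ (E_* \vq_1 - E_* \vq_2)\bigr)
    \,,
\end{equation*}
where $\mZ \coloneqq \mI - \gamma \mP \mE_b \mLambda$. Following the paper's convention, $\mLambda$ and $\mE_b$ are fixed linear operators. The greedy map $E_*$ is nonlinear, but it is nonexpansive: $\maxnorm{E_* \vq_1 - E_* \vq_2} \le \maxnorm{\vq_1 - \vq_2}$, since $\abs{\max_a x_a - \max_a y_a} \le \max_a \abs{x_a - y_a}$. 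So set $\vv \coloneqq E_* \vq_1 - E_* \vq_2 \in \RS$ with $\maxnorm{\vv} \le \maxnorm{\vq_1 - \vq_2}$. The goal is then to bound $\maxnorm{\mZ^{-1} \gamma \mP (\mI_\S - \mE_b \mLambda \mJ)\vv}$ by $\beta \maxnorm{\vv}$.

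\textbf{Step 1: diagonal form of the inner matrix.} Since $\mJ$ broadcasts state values to actions, $\mE_b \mLambda \mJ$ is the diagonal $\S \times \S$ matrix $\mathrm{diag}(\vc)$, where $c(s) = \sum_a b(a|s)\lambda(s,a) \in [0,1]$. Hence $\mI - \mE_b \mLambda \mJ = \mathrm{diag}(\vone_\S - \vc)$. This matrix is nonnegative, and its rows have sums $1 - c(s) \le 1 - \cmin$.

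\textbf{Step 2: nonnegativity of the trace matrix.} Expand $\mZ^{-1} = \sum_{k \ge 0} (\gamma \mP \mE_b \mLambda)^k$. The series converges because $\maxnorm{\gamma \mP \mE_b \mLambda} \le \gamma < 1$, and every term is entrywise nonnegative. Consequently
\begin{equation*}
    \maxnorm{\mZ^{-1}\gamma \mP\,\mathrm{diag}(\vone-\vc)\,\vv} \le \maxnorm{\mM \vone_\S}\,\maxnorm{\vv}
    \,,
\end{equation*}
where $\mM \coloneqq \mZ^{-1}\gamma\mP\,\mathrm{diag}(\vone-\vc)$ is a nonnegative matrix.

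\textbf{Step 3: bounding the row sums of $\mM$.} This is the main step, and the crude bound fails. Bounding $\maxnorm{\mZ^{-1}} \le 1/(1-\gamma)$ and the rest by $\gamma(1-\cmin)$ gives $\gamma(1-\cmin)/(1-\gamma)$, which can exceed $1$. Instead, compute $\mM\vone$ exactly through a telescoping identity. Use $\mP \vone_\S = \vone_{\SA}$ and $\mathrm{diag}(\vone-\vc)\vone_\S = \vone_\S - \vc = \vone_\S - \mE_b\mLambda\vone_{\SA}$. This gives
\begin{equation*}
    \gamma\mP\,\mathrm{diag}(\vone-\vc)\vone_\S = \gamma\vone_{\SA} - \gamma\mP\mE_b\mLambda\vone_{\SA} = \gamma\vone_{\SA} - (\mI - \mZ)\vone_{\SA}
    \,.
\end{equation*}
Therefore
\begin{equation*}
    \mM\vone_\S = \gamma\mZ^{-1}\vone_{\SA} - \mZ^{-1}\vone_{\SA} + \vone_{\SA} = \vone_{\SA} - (1-\gamma)\mZ^{-1}\vone_{\SA}
    \,.
\end{equation*}
It remains to lower-bound $\mZ^{-1}\vone_{\SA}$ entrywise. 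We have $\gamma\mP\mE_b\mLambda\vone_{\SA} = \gamma\mP\vc \ge \gamma\cmin\vone_{\SA}$. By induction on the Neumann series, using nonnegativity, $(\gamma\mP\mE_b\mLambda)^k\vone_{\SA} \ge (\gamma\cmin)^k\vone_{\SA}$. Summing gives $\mZ^{-1}\vone_{\SA} \ge \vone_{\SA}/(1-\gamma\cmin)$, and hence
\begin{equation*}
    \mM\vone_\S \le \Bigl(1 - \frac{1-\gamma}{1-\gamma\cmin}\Bigr)\vone_{\SA} = \frac{\gamma(1-\cmin)}{1-\gamma\cmin}\vone_{\SA}
    \,.
\end{equation*}

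\textbf{Step 4: conclusion.} Combining Steps 2 and 3 with the nonexpansiveness of $E_*$ yields the bound with modulus $\beta$. Finally, $\beta < 1$ because $\gamma(1-\cmin) < 1 - \gamma\cmin \iff \gamma < 1$.
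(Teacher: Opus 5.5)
Your proof is correct, and it reaches $\beta$ by a genuinely different route from the paper's.

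The paper never inverts $\mZ$. It differences the implicit recursion $\op\vq = T\vq + \gamma\mP\mE_b\mLambda(\op\vq - \mJ E_*\vq)$ from the proof of \Cref{prop:op_def} and takes entrywise absolute values. This yields the scalar self-bounding inequality $\maxnorm{\Delta\op\vq} \le \gamma\max_s\bigl(c(s)\maxnorm{\Delta\op\vq} + (1-c(s))\maxnorm{\Delta\vq}\bigr)$. A short contradiction argument shows $\maxnorm{\Delta\op\vq}\le\maxnorm{\Delta\vq}$, so the right-hand side is worst at $c(s)=\cmin$, and solving for $\maxnorm{\Delta\op\vq}$ gives $\beta$. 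That route is shorter and stays at the level of scalar norms. However, it presupposes that $\mZ$ is invertible, so that $\op\vq$ exists, and it hinges on a self-referential step.

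Your route uses the explicit closed form instead. The Neumann series supplies invertibility and entrywise nonnegativity of $\mZ^{-1}$, which the paper uses without proof. Your telescoping identity then gives the exact row sums $\vone_{\SA} - (1-\gamma)\mZ^{-1}\vone_{\SA}$ of the nonnegative linear factor $\mZ^{-1}\gamma\mP\,\mathrm{diag}(\vone_\S - \vc)$. Consequently, your identity yields more than $\beta$: it gives the exact max-norm operator norm of that factor, $1 - (1-\gamma)\min_{(s,a)}(\mZ^{-1}\vone_{\SA})(s,a)$. This is never larger than $\beta$, and it is strictly smaller whenever $\gamma>0$ and no state-action pair can transition into a state attaining $\cmin$. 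It also turns the paper's informal reading of $\mZ^{-1}$ as the expected eligibility trace into a precise statement: more expected trace mass means faster contraction.
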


\begin{proof}
    See \Cref{app:th_contraction}.
\end{proof}

The value $\cmin$ represents the smallest state-conditional expected trace-decay value across all states.
Because $\beta$ is monotonically decreasing with $\cmin$, this formalizes the intuition that the trace preservation inherent to methods like Peng's \Qlambda is indeed beneficial to convergence speed in expectation.
When we apply a constant value of $\lambda$ to all state-action pairs, the formula gracefully collapses back to
$\beta = \gamma (1 - \lambda) \mathbin{/} (1 - \gamma\lambda)$,
which exactly matches the known contraction modulus for standard TD($\lambda$) and Peng's \Qlambda \citep[see, e.g.,][Proof of Prop.~3.11]{daley2024averaging}.

For Gated \Qlambda, the algorithm's trace decay depends on whether a greedy action is taken: $\lambda$ if so and $\lambda \chi$ if not.
If the behavior policy has probability $p_g(s)$ of acting greedily in state $s$, then the expected trace value is
$c(s) = p_g(s) \cdot \lambda + (1-p_g(s)) \cdot \lambda \chi$.
This means $\cmin$ is determined by the \emph{least-greedy} state:
the state where the agent currently explores the most.\footnote{
    If the agent is executing an $\epsilon$-greedy policy, then exploration is state-wise uniform and we simply substitute the probability of taking a greedy action:
$p_g(s) = 1 - \epsilon + \epsilon \mathbin{/} \abs{\A}$.
}
This implies that a gate of $\chi < 1$ actually \emph{reduces} expected convergence speed, making the method more like Watkins' \Qlambda.
However, this can be compensated for by increasing $\lambda$ beyond what would be considered safe for Peng's \Qlambda, as we show with the fixed-point analysis in the next subsection.

\subsection{Fixed Point and Off-Policy Bias}

We have now established that $\op$ is a contraction mapping.
By the Banach fixed-point theorem \citep{banach1922sur}, $\op$ admits a \emph{unique} fixed point, to which repeated application of the operator converges:
i.e., $\lim_{i \to \infty} (\op)^i \vq$ exists and is the same for every $\vq \in \RSA$.
We must now identify the nature of this fixed point.

We once again leverage the recursive formula to unpack the components of the operator.
By doing so, we find that the algorithm implicitly evaluates a composite policy that interpolates between the behavior and (greedy) target policies.
\begin{restatable}{theorem}{fixedpoint}
    \label{th:fixed_point}
    Let $\pimix$ be the mixture policy defined by the following policy operator:
    \begin{equation*}
        \Emix \coloneqq \mE_b \mLambda + (\mI_\S - \mE_b \mLambda \mJ) E_*
        \,.
    \end{equation*}
    The unique fixed point of operator $\op$ is exactly the action-value function of $\pimix$:
    \begin{equation*}
        \qmix \coloneqq (\mI - \gamma \mP \Emix)^{-1} \vr
        \,.
    \end{equation*}
\end{restatable}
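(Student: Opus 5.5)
The plan is to verify directly that $\qmix$ is a fixed point of $\op$; uniqueness then comes for free from \Cref{th:contraction} and the Banach fixed-point theorem, since a contraction has at most one fixed point. Throughout, $Q$ (and hence $E_*$ and $\mLambda$) is held fixed, as in the analysis of \Cref{prop:op_def}. Under this convention $E_*$ is a linear policy operator, which we write as $\mE_*$. This makes every object in \Cref{eq:lambda_op_def} and in the definition of $\Emix$ a matrix.

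The first step is to check that $\Emix$ is a genuine policy operator, so that $\qmix$ is well defined. Each row of $\Emix$ must be a probability distribution over the actions of the corresponding state. For state $s$, the row of $\mE_b \mLambda$ has nonnegative entries $b(a|s)\lambda(s,a)$, and these sum to $c(s) \le 1$. Next, $\mE_b \mLambda \mJ$ is diagonal with entries $c(s)$, so $(\mI_\S - \mE_b \mLambda \mJ)\mE_*$ places the remaining mass $1-c(s) \ge 0$ on the greedy action. The row therefore sums to one. It follows that $\gamma \mP \Emix$ has max-norm $\gamma < 1$, so $\mI - \gamma \mP \Emix$ is invertible and $\qmix$ satisfies $\qmix = \vr + \gamma \mP \Emix \qmix$.

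The second step is the algebra. Since $\mI - \gamma \mP \mE_b \mLambda$ is invertible (its inverse appears in \Cref{eq:lambda_op_def}), $\op \vq = \vq$ is equivalent to
\begin{equation*}
(\mI - \gamma \mP \mE_b \mLambda)\vq = \vr + \gamma \mP \mE_* \vq - \gamma \mP \mE_b \mLambda \mJ \mE_* \vq .
\end{equation*}
Rearranging gives
\begin{equation*}
\vq = \vr + \gamma \mP\bigl(\mE_b \mLambda + \mE_* - \mE_b \mLambda \mJ \mE_*\bigr)\vq = \vr + \gamma \mP \Emix \vq .
\end{equation*}
This is exactly the Bellman evaluation equation for $\pimix$. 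Its unique solution is $\qmix = (\mI - \gamma \mP \Emix)^{-1}\vr$, which proves the claim. The argument also runs in the reverse direction: any fixed point of $\op$ must solve this linear system, so uniqueness does not even need the contraction result.

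The main obstacle is conceptual rather than computational. The greedy operator $\mE_*$ in $\Emix$ must be the one induced by the frozen $Q$ that also defines $\mLambda$, not by the evaluated vector $\vq$. Only then is the fixed-point equation linear and the mixture policy well defined. I would state this explicitly, consistent with the policy-evaluation assumption made before \Cref{prop:op_def}. I would also note that if $\mE_*$ were instead taken to depend on $\vq$, the same rearrangement still characterizes fixed points, but identifying them with a single $\qmix$ would require a consistency condition.
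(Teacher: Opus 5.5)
Your proposal is correct and follows essentially the same route as the paper. You rearrange the fixed-point equation $(\mI - \gamma \mP \mE_b \mLambda)\vq = T\vq - \gamma \mP \mE_b \mLambda \mJ E_* \vq$ into $\vq = \vr + \gamma \mP \Emix \vq$, treating $E_*$ as linear under the frozen-$Q$ assumption, and check that $\Emix$ is a valid stochastic operator via $1 - c(s) \geq 0$ and the row sums. Your small additions are sound and slightly tighten the paper's argument: you derive invertibility of $\mI - \gamma \mP \Emix$ explicitly from $\|\gamma \mP \Emix\|_\infty = \gamma < 1$, and you note that uniqueness follows from that linear system without invoking Banach's theorem.
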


\begin{proof}
    See \Cref{app:th_fixed_point}.
\end{proof}

For every choice of the $\lambda(s,a)$ function, variable \Qlambda converges to the value function corresponding to a policy which mixes the behavior and greedy policies.
For clarity, we show the point-wise definition of $\pimix$ which is derived directly from the operators' definitions:
\begin{equation*}
    \pimix(a|s) \coloneqq b(a|s) \lambda(s,a) + (1 - c(s)) \pi_{\text{greedy}}(a|s)
    \,.
\end{equation*}
\citet[][Th.~2]{kozuno2021revisiting} proved that Peng's \Qlambda converges to the value function of a mixture policy formed by the convex mixture $\lambda b(a|s) + (1-\lambda) \pi_{\text{greedy}}(a|s)$, which matches our result above when $\lambda(s,a) = \lambda$ for all $(s,a)$.
Thus, our \Cref{th:fixed_point} is a pure generalization.

This result is fascinating in that it formalizes the precise way in which variable \Qlambda methods trade off trace preservation with off-policy bias.
In particular, on-policy values seep into the return estimation whenever we simultaneously have $\lambda(s,a) > 0$ and $b(a|s) > 0$ for a non-greedy pair $(s,a)$, indicating that any amount of preserved trace during exploration contributes bias.
This is because such a pair has $\pimix(a|s) = b(a|s) \lambda(s,a) > 0$, whereas $\pi_{\text{greedy}}(a|s) = 0$, altering the fixed point because the mixture policy is no longer greedy.
This affirms that Watkins' \Qlambda is the only way to truly eliminate off-policy bias by setting $\lambda(s,a) = 0$ whenever $(s,a)$ is exploratory.
Nevertheless, biased \Qlambda methods can still converge to a \emph{nearby} value function, and converge to it much faster than Watkins' \Qlambda, making them practically useful.
Adapting $\lambda$ based on the state-action pair provides the exact capability to adjust where and how much off-policy bias is added to the return estimation.
Gated \Qlambda offers this capability via the gating mechanism $\chi \in [0,1]$ which only targets exploratory actions, enabling users to fine-tune the degree of off-policy bias without negatively impacting unbiased greedy actions.

\section{Conclusion}

We introduced Gated \Qlearning, a novel algorithmic framework that resolves the 30-year tension between the aggressive trace-cutting of Watkins' \Qlambda and the uncorrected bias of Peng's \Qlambda.
By formalizing partial off-policy corrections through a state-action-dependent gating mechanism, we provide a new continuum of algorithms that finely balances long-term credit assignment with off-policy bias.
Our theoretical analysis proves that this strategy always guarantees a contraction mapping and convergence, with smoothly bounded (but generally nonzero) fixed-point bias.
Key limitations include the newly added gate hyperparameter $\chi$, which in theory must be tuned in conjunction with $\lambda$.
However, the wide plateau near $\chi \approx 0.5$ in \Cref{fig:rw_heatmaps} suggests a favorable margin of error when setting this hyperparameter, at least in the tested environment.
Another limitation is that we did not make any direct empirical comparisons with importance-sampled estimators like Retrace \citep{munos2016safe}, as we focused specifically on greedy \Qlearning algorithms.
Evaluating the relative effectiveness of these two distinct off-policy corrections remains an important direction for future work.
Finally, our analysis did not consider the variance of the return estimates due to the complexity of the problem, though it seems likely that forgoing importance-sampling ratios is a significant boon to variance reduction.

Although not evaluated in the deep RL setting, Gated \Qlearning can seamlessly integrate into existing agents, including trajectory-replay methods that use \lambdareturns such as A3C \citep{mnih2016asynchronous}, DQN($\lambda$) \citep{daley2019reconciling}, and PQN \citep{gallici2025simplifying}, as well as minibatch-replay methods in the DQN family \citep{mnih2015human,hessel2018rainbow} that use \nsteptext returns (recall \Cref{sub:nstep_gated_qlearning}).
We see significant potential to improve performance in these algorithms, as off-policy bias tends to be extreme under experience replay.
Here, Gated \Qlearning can serve as a simpler and more efficient alternative to importance sampling, though we make no claim of superior performance yet.


\appendix


\bibliography{main}
\bibliographystyle{imports/rlj}

\beginSupplementaryMaterials
\section{Proofs}

This section contains the full mathematical proofs omitted from the main paper for exposition ease.

\subsection{Proof of \Cref{prop:op_def}}
\label{app:prop_op_def}

\opdef*

\begin{proof}
    We first rewrite the recursive $\lambda$-return formula from \Cref{eq:recursive_general} as
    \begin{equation*}
        \tilde{G}^\lambda_t = R_{t+1} + \gamma V(S_{t+1}) + \gamma \lambda_{t+1} \Bigl(\tilde{G}^\lambda_{t+1} - V(S_{t+1}) \Bigr)
        \,.
    \end{equation*}
    To convert this to an operator equation, we must convert the expected values of these quantities to their matrix or vector equivalents:
    \begin{equation}
        \label{eq:op_def_recursive}
        \op \vq = T \vq + \gamma \mP \mE_b \mLambda (\op \vq - \mJ E_* \vq)
        \,.
    \end{equation}
    This is because the first term is the Bellman operator from \Cref{eq:bellman_op}, the time-step shift and decay are handled by multiplying $\gamma \mP \mE_b \mLambda$, and the inner term references the operator itself again, minus the state-value estimate (which must be expanded by $\mJ$ to match the $\SA$ dimension).

    We conclude by solving \Cref{eq:op_def_recursive} algebraically for $\op$.
    (Here and throughout, the unsubscripted identity matrix $\mI$ has dimension $\abs{\SA} \times \abs{\SA}$.)
    Rearranging gives
    \begin{equation}
        \label{eq:fixed_point_start}
        (\mI - \gamma \mP \mE_b \mLambda) \op \vq = T \vq - \gamma \mP \mE_b \mLambda \mJ E_* \vq
        \,,
    \end{equation}
    and left-multiplying both sides by $(\mI - \gamma \mP \mE_b \mLambda)^{-1}$ yields \Cref{eq:lambda_op_def},
    completing the derivation.
\end{proof}

\subsection{Proof of \Cref{th:contraction}}
\label{app:th_contraction}

\contraction*

\begin{proof}
Let $\vq_1, \vq_2 \in \RSA$.
To simplify notation, we define the differences
$\Delta \op \vq \coloneqq {\op \vq_1 - \op \vq_2}$,
$\Delta \vq \coloneqq {\vq_1 - \vq_2}$,
and $\Delta \vv \coloneqq {E_* \vq_1 - E_* \vq_2}$.
Note that $\Delta T \vq = \gamma \mP \Delta \vv$.
From substitution into the recursive definition of $\op$ in \Cref{eq:op_def_recursive} and by linearity, we have
\begin{align*}
    \Delta \op \vq &= \gamma \mP \Delta \vv + \gamma \mP \mE_b \mLambda (\Delta \op \vq - \mJ \Delta \vv) \\
    &= \gamma \mP \mE_b \mLambda \Delta \op \vq + \gamma \mP (\mI_\S - \mE_b \mLambda \mJ) \Delta \vv
    \,.
\end{align*}
To form an upper bound, we apply the \emph{element-wise} absolute value to each vector and invoke the triangle inequality.
We then use the fact that $\abs{\vx} \leq \maxnorm{\vx} \vone$ holds element-wise.
Additionally, observe that $(\mI_\S - \mE_b \mLambda \mJ) \vone_\S = \vone_\S - \mE_b \mLambda \vone_{\SA} = \vone_\S - \vc$.
Most of the terms have nonnegative components and can be safely pulled out of the absolute value:
\begin{align*}
    \abs{\Delta \op \vq} &\leq \gamma \mP \mE_b \mLambda \abs{\Delta \op \vq} + \gamma \mP (\mI_\S - \mE_b \mLambda \mJ) \abs{\Delta \vv} \\
    &\leq \gamma \mP \Bigl( \maxnorm{\Delta \op \vq} \mE_b \mLambda \vone_{\SA} + \maxnorm{\Delta \vv} (\mI_\S - \mE_b \mLambda \mJ) \vone_\S \Bigr) \\
    &= \gamma \mP \Bigl( \maxnorm{\Delta \op \vq} \vc + \maxnorm{\Delta \vv} (\vone_\S - \vc) \Bigr) \\
    &\leq \gamma \mP \Bigl( \maxnorm{\Delta \op \vq} \vc + \maxnorm{\Delta \vq} (\vone_\S - \vc) \Bigr)
    \,,
\end{align*}
where the last step follows because the $\max$ operator is non-expansive, hence $\maxnorm{\Delta \vv} \leq \maxnorm{\Delta \vq}$.

Note that $\maxnorm{\mP} = 1$ because $\mP$ is a stochastic matrix.
Taking the norm of both sides yields
\begin{align}
    \nonumber
    \maxnorm{\Delta \op \vq} &\leq \gamma \Bigl\| \maxnorm{\Delta \op \vq} \vc + \maxnorm{\Delta \vq} (\vone_\S - \vc) \Bigr\|_\infty \\
    \label{eq:maximize_convex_combo}
    &= \gamma \max_{s} \Bigl( c(s) \maxnorm{\Delta \op \vq} + (1 - c(s)) \maxnorm{\Delta \vq} \Bigr)
    \,.
\end{align}
This inequality presents a convex combination of $\norm{\Delta \op \vq}_\infty$ and $\norm{\Delta \vq}_\infty$.
If we assume for a moment that $\norm{\Delta \op \vq}_\infty > \norm{\Delta \vq}_\infty$, the convex combination would be strictly less than $\norm{\Delta \op \vq}_\infty$.
This would imply $\norm{\Delta \op \vq}_\infty < \gamma \norm{\Delta \op \vq}_\infty$, which is impossible since $\gamma \leq 1$.
Therefore, it must be universally true that $\norm{\Delta \op \vq}_\infty \leq \norm{\Delta \vq}_\infty$.

Because $\norm{\Delta \vq}_\infty$ is the larger of the two quantities, the convex combination is maximized by placing as much weight as possible on it.
This means maximizing $1 - c(s)$ in \Cref{eq:maximize_convex_combo}, which is achieved by setting $c(s) = \cmin$.
Substituting $\cmin$ into the bound yields
\begin{equation*}
    \maxnorm{\Delta \op \vq} \leq \gamma \cmin \maxnorm{\Delta \op \vq} + \gamma (1-\cmin) \maxnorm{\Delta \vq}
    \,,
\end{equation*}
which rearranges to
\begin{equation*}
    \maxnorm{\Delta \op \vq} \leq \underbrace{\frac{\gamma (1-\cmin)}{1 - \gamma \cmin}}_{\beta} \maxnorm{\Delta \vq}
    \,.
\end{equation*}
Because $0 \leq \cmin \leq 1$ by definition of $\lambda(s,a)$, the coefficient $\beta$ is strictly less than $1$ whenever $\gamma < 1$, confirming that $\op$ is a contraction mapping with the stated modulus.
\end{proof}

\subsection{Proof of \Cref{th:fixed_point}}
\label{app:th_fixed_point}

\fixedpoint*

\begin{proof}
    At the fixed point, we must have $\op \vq = \vq$.
    Starting from \Cref{eq:fixed_point_start}, expanding $T \vq$ using \Cref{eq:bellman_op}, and then substituting $\vq$ for $\op \vq$, we obtain
    \begin{align}
        \nonumber
        (\mI - \gamma \mP \mE_b \mLambda) \vq
        &= \vr + \gamma \mP E_* \vq - \gamma \mP \mE_b \mLambda \mJ E_* \vq \\
        \nonumber
        \vq &= \vr + \gamma \mP \mE_b \mLambda \vq + \gamma \mP E_* \vq - \gamma \mP \mE_b \mLambda \mJ E_* \vq \\
        \label{eq:factor_q}
        &= \vr + \gamma \mP (\mE_b \mLambda + E_* - \mE_b \mLambda \mJ E_*) \vq \\
        \label{eq:mixture}
        &= \vr + \gamma \mP \Emix \vq
        \,,
    \end{align}
    where the last step substitutes the policy operator defined in the theorem.
    Note that factoring out $\vq$ in \Cref{eq:factor_q} is justified because, as established earlier, $Q$ is assumed to be fixed for the operator evaluation.
    This allows us to locally treat the greedy expectation $E_*$ as a linear operator.

    \Cref{eq:mixture} reveals that the fixed point takes the standard Bellman form for the previously defined mixture policy.
    To complete the proof, we must verify that $\Emix$ is a valid stochastic matrix, thereby representing a realizable policy.
    This means $\Emix$ must comprise exclusively nonnegative elements and its rows must sum to one.

    We first establish nonnegativity.
    By definition, the policy operators $\mE_b$ and $E_*$, as well as the trace matrix $\mLambda$, contain exclusively nonnegative elements.
    The only term that could theoretically introduce negative values is the subtraction in $\mI_\S - \mE_b \mLambda \mJ$.
    However, the operator $\mE_b \mLambda \mJ$ effectively maps a state to itself with the conditionally expected trace decay $c(s) = \sum_{a} b(a|s)\lambda(s,a)$.
    Because $\lambda(s,a) \in [0,1]$ and the behavior policy $b$ is a valid probability distribution, it is guaranteed that $c(s) \in [0,1]$ for all $s \in \S$.
    Consequently, $\mI_\S - \mE_b \mLambda \mJ$ is a diagonal matrix whose diagonal entries are $1 - c(s) \geq 0$.

    Finally, we verify the row sums by applying the operator to the all-ones vector.
    Recall from the proof of \Cref{th:contraction} that $\vc = \mE_b \mLambda \vone_{\SA}$ and $\vone_\S - \vc = (\mI_\S - \mE_b \mLambda \mJ) \vone_\S$.
    Therefore,
    \begin{align*}
        \Emix \vone_{\SA} 
        &= \mE_b \mLambda \vone_{\SA} + (\mI_\S - \mE_b \mLambda \mJ) E_* \vone_{\SA} \\
        &= \vc + (\mI_\S - \mE_b \mLambda \mJ) E_* \vone_{\SA} \\
        &= \vc + (\mI_\S - \mE_b \mLambda \mJ) \vone_{\S} \\
        &= \vc + \vone_\S - \vc \\
        &= \vone_\S
        \,.
    \end{align*}
    Since applying $\Emix$ to the all-ones vector perfectly recovers the all-ones vector (of the appropriate dimensions), the row sums of the implied transition matrix all equal $1$.
    Coupled with the fact that it has nonnegative components, $\Emix$ represents a valid policy and the proof is complete.
\end{proof}

\end{document}